\newcommand{\W}[3]{
{\stackrel{{#1}}{W}} \,_{#2}^{#3}
}
\def\1{{\mathbf 1}}
\newtheorem{thm}{Theorem}
\title{Online Coordinate Boosting}
\author{Raphael Pelossof \\
Department of Computer Science\\
Columbia University\\
2960 Broadway, New York, NY 10027 \\
\texttt{pelossof@cs.columbia.edu} \\
\And
Michael Jones \\
Mitsubishi Electric Research Labs \\
201 Broadway,
Cambridge, MA 02139 \\
\texttt{mjones@merl.com} \\
\AND
Ilia Vovsha \\
Columbia University\\
2960 Broadway, New York, NY 10027\\
\texttt{iv2121@columbia.edu} \\
\And
Cynthia Rudin \\
Columbia University\\
Center for Computational Learning Systems\\
Interchurch Center, 475 Riverside Drive MC 7717\\
New York, NY 10115\\
\texttt{rudin@ccls.columbia.edu} \\
}
\begin{document}

%\makeanontitle
\maketitle

\begin{abstract}
We present a new online boosting algorithm for adapting the weights of a boosted classifier, which yields a closer approximation to Freund and Schapire's AdaBoost algorithm than previous online boosting algorithms.
We also contribute a new way of deriving the online algorithm that ties together previous online boosting work.
We assume that the weak hypotheses were selected beforehand, and only their weights are updated during online boosting.  
The update rule is derived by minimizing AdaBoost's loss when viewed in an incremental form.
The equations show that optimization is computationally expensive. However, a fast online approximation is possible.
We compare approximation error to batch AdaBoost on synthetic datasets and generalization
error on face datasets and the MNIST dataset.
\end{abstract} 
\section{Introduction}
\label{submission}
Most practical algorithms for object detection or classification
require training a classifier that is general enough to work in almost
any environment.  Such generality is often not needed once
the classifier is used in a real application.  A face detector, for example, 
may be run on a fixed camera data stream and
therefore not see much variety in non-face patches.  Thus, it would be desirable 
to adapt a classifier in an online fashion to achieve
greater accuracy for specific environments. In addition, the target concept
might shift as time progresses and we would like the classifier
to adapt to the change. Finally, the stream 
may be extremely large which deems batch-based 
algorithms to be ineffective for training.

Our goal is to create a fast and accurate online learning algorithm that can adapt an
existing boosted classifier to a new environment and concept change.
%We would like the adapted classifier to achieve high accuracy in
%the new environment. 
This paper looks at the core problem that
must be solved to meet this goal which is to develop
a fast and accurate sequential online learning algorithm. 
We use a traditional online learning approach, which is to assume that the feature mapping is selected beforehand and is fixed while training. The paradigm allows us to adapt our algorithm easily to a new environment. 
The algorithm is derived by looking at the minimization
of AdaBoost's exponential loss function when training AdaBoost 
with N training examples, then adding a single example to the training
set, and retraining with the new set of $N+1$ examples.  
The equations show that an online algorithm that exactly
replicates batch AdaBoost is not possible, since the update requires computing 
the classification results of the full dataset by all the weak hypotheses. We show that a simple and
approximation that avoids this costly computation is possible, resulting in a fast online algorithm. 
%The approximation is very fast to compute. 
Our experiments show that by greedily minimizing the approximation error at each coordinate we are able to approximate batch AdaBoost better than Oza and Russell's algorithm.
% provide a greedy update rule that minimizes the approximation error at each coordinate. The update is performed for every example in the stream.
%The update minimizes the approximation loss and is performed for every example in the stream. 
%We also connect our algorithm to earlier work by Oza and Russell \cite{oza01online}.
%With this motivation, we present the Online Coordinate Boosting algorithm for
%adapting the weights of a boosted classifier. 

The paper is organized as follows, in section 2 we discuss related work. In section 3 we present AdaBoost in exact incremental form, then we derive a fast approximation to this form, and discuss issues that arise when implementing the approximation as an algorithm. We also compare our algorithm with Oza and Russell's algorithm \cite{oza01online}. We conclude with experiments and a short discussion in section 4.
\section{Related Work}
The problem of adapting the weights of existing classifiers is a topic of ongoing 
research in vision \cite{huang07incremental,javed05online, pham07online, wu07improving}.
Huang et al's \cite{huang07incremental} work is most closely related to our work.
They proposed an incremental learning algorithm to update the weight of each weak hypothesis.
Their final classifier is a convex combination of an offline model
and an online model. Their offline model is trained solely on offline examples, and is based on a similar 
approximation to ours. Our model combines both their models into one uniform model, which does not differentiate between offline and online examples. This allows us to continuously adapt
regardless of whether or not the examples were seen in the offline or online
part of the training. Also, by looking at the change in
example weights as a single example is added to the training set, we are able to 
compute an exact update to the weak hypotheses weights, in an online manner, that does not require a line search as in Huang et al's work.

Our online algorithm stems from an approximation to AdaBoost's loss minimization
as the training set grows one example at a time. 
We use a multiplicative update rule to adapt the classifier weights.
The multiplicative update for online algorithms was first proposed by Littlestone 
\cite{littlestone88learning} with the Winnow algorithm. Kivinen and Warmuth 
\cite{kivinen97exponentiated} extended the update rule
of Littlestone to achieve a wider set of classifiers by incorporating 
positive and negative weights. Freund and Schapire \cite{freund97decision}
converted the online learning paradigm to batch learning with multiplicative weight updates.
Their AdaBoost algorithm keeps two sets of weights, one on the data and one on the weak hypotheses.
AdaBoost updates the example weights at each training round to form a harder problem for the next round.
This type of sequential reweighting in an online setting, where only one example is kept at any time,
was later proposed by Oza and Russell \cite{oza01online}.
They update the weight of each weak hypothesis 
sequentially. 
%The sequential update allows each weak hypothesis to be faced with 
%a reweighted example. The weight is proportional to the margin of the classified example
%with the part of the classifier that was evaluated before the current weak hypothesis. 
At each iteration, a weak hypothesis classifies a weighted example, where the example's weight is derived from the performance of the current combination of weak hypotheses.
%Oza and Russell were the first to present a sequential online boosting algorithm. 
Like our algorithm, Oza and Russell's algorithm has a sequential update for the weights of the weak hypotheses, 
however, unlike ours, theirs includes feature selection. Our algorithm is also derived from the more recent
AdaBoost formulation \cite{schapire99improved}. 
%By looking at the change in weak hypothesis weights when training
%AdaBoost with N examples, then adding a single example to the training set, and retraining with 
%with N+1 examples, we derive incremental equations for updating the weights to exactly reproduce the
%weights from batch AdaBoost. 
%These exact incremental equations require
%storing the margins of all previously seen examples and so are not
%practical in an online setting.  However, it is possible to formulate a fast
%approximation to these the exact update equations.
We show how the Online Coordinate Boosting algorithm weight update rule can be reduced to Oza and
Russell's update rule with a few simple modifications.

Both our and Oza and Russell's algorithms store for each classifier an approximation of the sums of example weights that were correctly and incorrectly classified by each weak hypothesis. They can be seen as algorithms for estimating the weighted error rate of each weak hypothesis under memory and speed constraints. 
Another algorithm that can be seen this way is Bradley and Schapire's FilterBoost algorithm \cite{bradley07filterboost}. FilterBoost uses nonmonotonic
adaptive sampling together with a filter to sequentially estimate
the edge, an affine transformation of the weighted error, of each weak hypothesis. When the edge
is estimated with high probability the algorithm updates its classifier and continues
to select and train the next weak hypothesis. 
% FilterBoost has a convergence guarantee for fixed concepts.
%, which assumes that the data comes from some unknown distribution. It therefore 
Unlike our and Oza and Russell's algorithm, FilterBoost cannot adapt already selected weak hypotheses weights to drifting concepts.

\section{Online Coordinate Boosting}
We would like to minimize batch AdaBoost's bound on the error using a fast update
rule as examples are presented to our algorithm. Let $(x_1,y_1),..,(x_{N+1},y_{N+1})$ be a stream of labeled examples $x_i\in {\cal R}^M, y_i\in\{-1,1\}$, and let a classifier be defined by a linear combination of weak hypotheses $H(x) = sign(\sum_{j=1}^J \alpha_j h_j(x))$, where the weights are real-valued $\alpha_j\in{\cal R}$ and each weak hypothesis $h_j$ is preselected and is binary $h_j(x)\in\{-1,1\}$. 
We use the term \textit{coordinate} as the index of a weak hypothesis. Let $m_{ij}=y_ih_j(x_i)$ be defined as the margin which is equal to 1 for correctly classified examples and -1 for incorrectly classified examples by weak hypothesis $j$. 
Throughout training, AdaBoost maintains a weighted distribution over the examples. The weights at each time step are set to minimize the classification error according to batch AdaBoost \cite{schapire98boosting}. Adding a single example to the training set changes the weights of the examples, and the weights of the entire classifier. AdaBoost defines the weight of example $i$ as $d_{iJ}=e^{-\sum_{j=1}^{J-1} \alpha_j m_{ij}}$, which implies $d_{iJ} = d_{i,J-1} e^{-\alpha_{J-1}m_{i,J-1}}$. Furthermore, the weight of a weak hypothesis $J$ is defined as $\alpha_J=\frac{1}{2}\log{W_J^+}/{W_J^-}$, where the sums of correctly and incorrectly classified examples by weak hypothesis $j$ are defined by $W_{J}^+=\sum_{i:m_{iJ}=+1}d_{iJ}$ and $W_J^-=\sum_{i:m_{iJ}=-1}d_{iJ}$ correspondingly. We define $\1_{[\:]}$ as the indicator function.

We use superscript to indicate time, which in the batch setting is the number of examples in the training set, and in the online setting is the index of the last example. To improve legibility, if we drop the superscript from an equation, the time index is assumed to be $N+1$.
Therefore, when adding the $N+1$ example, the weights of the other examples will change from $d_{iJ}^N$ to $d_{iJ}^{N+1}$ and the weights of each weak hypothesis from $\alpha_J^N$ to $\alpha_J^{N+1}$. We denote the change in a weak hypothesis weights as $\Delta\alpha_J^N = \alpha_J^{N+1}-\alpha_J^N$.

\subsection{AdaBoost in exact incremental form}
AdaBoost's loss function $Z_{J+1}=\sum_i d_{iJ}e^{-\alpha_{J} m_{iJ}}$ bounds the training error. It has been shown \cite{schapire98boosting, schapire99improved} that minimizing this loss tends to lower generalization error. We are motivated to minimize a fast and accurate approximation to the same loss function, as each example is presented to our algorithm. %, one coordinate at a time. 
Similarly to AdaBoost, we fix all the coordinates up to coordinate $J$, and seek to minimize the approximate loss at the $J$th coordinate. The optimization is done by finding the update  $\Delta\alpha_{J}^N$ that minimizes AdaBoost's approximate loss with the addition of the last example. More formally, we are given the previous weak hypotheses weights $\alpha_1^N,..,\alpha_{J}^N$ and their updates so far $\Delta\alpha_1^N,..,\Delta\alpha_{J-1}^N$ and wish to compute the update $\Delta\alpha_{J}^N$ that minimizes $Z_{J+1}$. The resulting update rule is the change we would get in coordinate $J$'s weight if we trained batch AdaBoost with N examples and then added a new example and retrained with the larger set of $N+1$ examples. Looking at the derivative of batch AdaBoost's loss function when adding a new example, we get the update rule for $\Delta\alpha_{J}^N$:
\begin{eqnarray}
Z_{J+1} &=& \sum_{i=1}^{N+1} d_{iJ} e^{-\alpha_{J}^{N+1}m_{iJ}}
=\sum_{i=1}^{N+1} d_{iJ} e^{-(\alpha_{J}^{N}+\Delta\alpha_{J}^N)m_{iJ}}\\
\frac{\partial Z_{J+1}}{\partial\Delta\alpha_{J}^N}&=&-\sum_{i=1}^{N+1} d_{iJ} e^{-(\alpha_{J}^{N}+\Delta\alpha_{J}^N)m_{iJ}}m_{iJ}\\
&=& \sum_{i:m_{iJ}=-1}d_{iJ} e^{(\alpha_{J}^N+\Delta\alpha_{J}^N)}-\sum_{i:m_{iJ}=+1}d_{i,J} e^{-(\alpha_{J}^N+\Delta\alpha_{J}^N)}\\
&=& \W{-}{J}{N+1}e^{(\alpha_{J}^N+\Delta\alpha_{J}^N)} - \W{+}{J}{N+1}e^{-(\alpha_{J}^N+\Delta\alpha_{J}^N)}.
\end{eqnarray}
Setting the derivative to zero and solving for $\Delta\alpha_{J}^N$ we get:
\begin{eqnarray}
\label{eqn:alpha_update}
\Delta\alpha_J^N=\frac{1}{2}\log\frac{\W{+}{J}{N+1}}{\W{-}{J}{N+1}}-\alpha_J^N
\doteq \alpha_J^{N+1}-\alpha_J^N.
\end{eqnarray}
The update $\Delta\alpha_J^N$ that minimizes $Z_{J+1}$ is dependent on two quantities $\W{+}{J}{N+1}$ and $\W{-}{J}{N+1}$. These are the sums of weights of examples that were respectively classified correctly and incorrectly by weak hypothesis $J+1$, when training with $N+1$ examples. 

We rewrite these sums in an incremental form. The incremental form is derived by separating the weight of the last example that was added to each of the sums from the rest of the sum. This will allow us later on to compute a fast incremental approximation to them, resulting in our online algorithm. 
We combine the analysis of both sums by incorporating the parameter $\sigma\in\{-1,+1\}$, which represents the sign of the margin of the examples being grouped by the cumulative sum.
Formally, we will break these subsets to subsets over N weights $\{d_{1J},..,d_{NJ}\}$, and the weight of the last example $d_{N+1,J}$ which is added to the appropriate sum using the function $g_J^\sigma = d_{N+1,J}\1_{[m_{N+1,J}=\sigma]}$:
\begin{eqnarray}
\W{\sigma}{J}{N+1} &=& \sum_{i:m_{iJ}=\sigma}d_{iJ} 
=\sum_{i_{J}^\sigma}d_{iJ} + g_{J}^\sigma
=\sum_{i_{J}^\sigma}\prod_{j=1}^{J-1} e^{- \alpha_{j}^{N+1}m_{ij}} + g_{J}^\sigma\\
&=&\sum_{i_{J}^\sigma}\prod_{j=1}^{J-1} e^{- (\alpha_j^N+\Delta\alpha_j^N)m_{ij}}  + g_{J}^\sigma
\label{eqn:pseudoincremental_pos}
=\sum_{i_{J}^\sigma} d_{iJ}^{N} \prod_{j=1}^{J-1} e^{-\Delta\alpha_j^N m_{ij}} + g_J^\sigma.
\end{eqnarray}
We define the subsets of examples as $i_{J}^\sigma =\{ i |(m_{iJ}=\sigma) \wedge (i\le N)\}$.
We partition the indices of the first $N$ examples to two subsets: a subset of correctly classified examples, where $\sigma=+1$, and incorrectly classified examples, where $\sigma=-1$.
\subsection{A fast approximation to the incremental form}
Equation \ref{eqn:pseudoincremental_pos} is a sum product expression which is
costly to compute and requires that the margins of all previous
examples be stored. In order to make this an online algorithm which stores only one example in the memory,
we approximate each term in the product with a term that is independent of
all of the margins $m_{ij}$. This type of approximation enables us to separate
the sum of the weights from the product terms, which results in a faster approximate update rule:
\begin{eqnarray}
\label{eqn:sumprod}
\W{\sigma}{J}{N+1} &=& \sum_{i_{J}^\sigma} d_{iJ}^{N} \prod_{j=1}^{J-1} {e^{-m_{ij} \Delta\alpha_j^N}}+ g_{J}^\sigma\\
\label{eqn:sumprodapprox}
&\approx& \W{\sigma}{J}{N}\prod_{j=1}^{J-1} (q_{jJ}^\sigma e^{-\Delta\alpha_j^N} + (1-q_{jJ}^\sigma)e^{\Delta\alpha_j^N})  +g_{J}^\sigma
%\label{eqn:sumprodapproxsol}
%&=& \W{\sigma}{J+1}{N} \prod_{j=1}^{J} \left( \frac{W_{j,J+1}^{+\sigma}}{W_{J+1}^\sigma} e^{-\Delta\alpha_j^N} + \frac{W_{j,J+1}^{-\sigma}}{W_{J+1}^\sigma}e^{\Delta\alpha_j^N} \right)+g_{J}^\sigma
\end{eqnarray}
%where $W_{j,J+1}^{\sigma_j \sigma_{J+1}} = \sum_{i:m_{ij}=\sigma_j \wedge m_{i,J+1}=\sigma_{k}} d_{iJ}^N$,
where $q_{jJ}^\sigma\in {\cal R}$. The transition from equation \ref{eqn:sumprod} to \ref{eqn:sumprodapprox} is done in two steps. The terms in the product are approximated by new terms that are independent of $i$. Given this independence, the sum of weighted examples can be grouped to the cumulative sum of previous weights. Equation \ref{eqn:sumprodapprox} is very similar to Huang et al's offline loss function. However, by greedily solving the approximation error equations, we show that the update to the model should take into account all the examples, and not just the offline ones as in \cite{huang07incremental}.

Since our approximation incurs errors, we would like to find for each weak hypothesis the parameters $q_{jJ}^\sigma$ that minimize the approximation error. Equation \ref{eqn:sumprodapprox} can be rewritten in two equivalent forms to show two types of errors:
\begin{eqnarray}
\W{\sigma}{J}{N+1}&\approx&\sum_{i_{J}^\sigma} d_{iJ}^N \prod_{j=1}^{J-1}( e^{\Delta\alpha_j} + q_{jJ}^\sigma (e^{-\Delta\alpha_j} - e^{\Delta\alpha_j}))+ g_J^\sigma \label{eqn:approx_err1}\\
\label{eqn:approx_err2}
&=&\sum_{i_{J}^\sigma} d_{iJ}^N \prod_{j=1}^{J-1}( e^{-\Delta\alpha_j} + (1-q_{jJ}^\sigma) (e^{\Delta\alpha_j} - e^{-\Delta\alpha_j})) + g_J^\sigma.
\end{eqnarray}
%We get equation \ref{eqn:approx_err1} by regrouping terms in \ref{eqn:sumprodapprox}, and equation \ref{eqn:approx_err2} by adding and subtracting $e^{-\Delta\alpha_j}$ from each term in the product, and regrouping terms. These two forms help us evaluate the error of our approximation given a set of positive and negative margins.
The equivalent approximation forms give us a way to compute the exact error for any choice of $m_{ij}$. However, the exact error expression may have $2^J$ terms and exactly minimizing it may be costly. Instead, by taking a greedy approach and looking at a part of the error terms we are able to minimize the approximation error at each coordinate. We formulate the problem as follows: nature chooses a set of margins $m_{ij}$ and the booster chooses $q_{1J}^\sigma,..,q_{jJ}^\sigma$ to minimize the approximation error of the boosted classifier at each coordinate. Let $\delta_j=e^{-\Delta\alpha_j}-e^{\Delta\alpha_j}$, then for each weak hypothesis, if nature choses the margin $m_{ij}=-1$, according to \ref{eqn:approx_err1}, the squared error at coordinate $j$ is $(q_{jJ}^\sigma)^2\delta_j^2$. If nature chooses a margin $m_{ij}=+1$, then according to \ref{eqn:approx_err2}, the squared error at coordinate $j$ is $ (1-q_{jJ}^\sigma)^2\delta^2$. We look at squared error to avoid negative errors. Regardless of the choice of margin, we can only make one type of error since the margins are binary. 

Theorem \ref{thm:approxsol} gives us the solution for parameters $q_{jJ}^\sigma$ using  a greedy minimization of the weighted squared approximation error at each coordinate.
\begin{thm}
\label{thm:approxsol}
Let the weighted squared approximation error at coordinate $j$ and sign $\sigma$ be defined by
\begin{equation}
\epsilon_{jJ}^\sigma =\sum_{i_{J}^\sigma} d_{iJ}^N  \left({\bf 1}_{[i:m_{ij}=-1]} (q_{jJ}^\sigma)^2\delta_j^2 + {\bf 1}_{[i:m_{ij}=+1]} (1-q_{jJ}^\sigma)^2\delta_j^2\right).
\end{equation}
Then, the minimizer $q_{jJ}^\sigma$ of the weighted approximation error at coordinate $j$ is:
\begin{equation}
q_{jJ}^\sigma =\frac{ \sum_{i_{J}^\sigma \wedge i_j^+}d_{iJ}^N}{\sum_{i_{J}^\sigma} d_{iJ}^N}.
\end{equation}
\end{thm}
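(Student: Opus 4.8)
The plan is to exploit the fact that, for a fixed coordinate $j$ and sign $\sigma$, the quantity $\epsilon_{jJ}^\sigma$ is a one-dimensional convex quadratic in the scalar $q_{jJ}^\sigma$, so its minimizer is found by a single differentiation. First I would simplify the indicators inside the sum. Each index $i\in i_J^\sigma$ satisfies exactly one of $m_{ij}=-1$ or $m_{ij}=+1$, so the two indicator functions partition $i_J^\sigma$ into the block misclassified by $j$ (call it $i_J^\sigma\wedge i_j^-$) and the block correctly classified by $j$ (which is $i_J^\sigma\wedge i_j^+$). Writing $q=q_{jJ}^\sigma$ and pulling out the common factor $\delta_j^2$, the error collapses to
\begin{equation}
\epsilon_{jJ}^\sigma=\delta_j^2\Bigl(q^2\sum_{i_J^\sigma\wedge i_j^-}d_{iJ}^N+(1-q)^2\sum_{i_J^\sigma\wedge i_j^+}d_{iJ}^N\Bigr).
\end{equation}

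Next I would abbreviate $A=\sum_{i_J^\sigma\wedge i_j^-}d_{iJ}^N$ and $B=\sum_{i_J^\sigma\wedge i_j^+}d_{iJ}^N$, so that $A+B=\sum_{i_J^\sigma}d_{iJ}^N$ and $\epsilon_{jJ}^\sigma=\delta_j^2(q^2A+(1-q)^2B)$. Differentiating with respect to $q$ and setting the result to zero gives $2\delta_j^2\,(qA-(1-q)B)=0$, i.e. $q(A+B)=B$. Solving yields $q=B/(A+B)$, which is precisely the claimed expression $q_{jJ}^\sigma=\bigl(\sum_{i_J^\sigma\wedge i_j^+}d_{iJ}^N\bigr)/\bigl(\sum_{i_J^\sigma}d_{iJ}^N\bigr)$.

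Finally I would confirm that this stationary point is the global minimizer. The second derivative is $2\delta_j^2(A+B)\ge 0$, since every weight $d_{iJ}^N$ is nonnegative, so $\epsilon_{jJ}^\sigma$ is convex in $q$ and the critical point is its unique minimizer whenever $\delta_j\neq 0$ and $A+B>0$. The only subtlety to note — more a case to check than a genuine obstacle — is the degenerate situation $\delta_j=0$ (equivalently $\Delta\alpha_j^N=0$), where the error vanishes identically and every $q$ is optimal, so the stated formula remains a valid choice. Apart from this, the argument is elementary convex optimization, with the main care needed only in the bookkeeping of the index partition that turns the indicator-weighted sum into the two masses $A$ and $B$.
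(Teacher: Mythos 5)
Your proposal is correct and follows essentially the same route as the paper's proof: treat $\epsilon_{jJ}^\sigma$ as a convex quadratic in the scalar $q_{jJ}^\sigma$, differentiate, set the derivative to zero, and identify the resulting ratio of weight sums. If anything, your handling of the degenerate case $\delta_j=0$ is more careful than the paper's, which asserts $\delta_j\neq 0$ on the grounds that the example weights are positive (a non sequitur, since $\delta_j$ vanishes exactly when $\Delta\alpha_j^N=0$, independently of the weights), whereas you correctly observe that in that case every $q$ is optimal and the stated formula remains a valid minimizer.
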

\begin{proof}
Using a greedy approach and looking at the weighted squared approximation error at a single coordinate $j$ given the weights of the examples at coordinate $J$, we solve for $q_{jJ}^\sigma$. Since the error function is convex, we can take derivatives and solve to find the global minimum:
\begin{eqnarray}
\frac{\partial \epsilon_{jJ}^\sigma}{\partial q_{jJ}^\sigma}&=&2\delta_j^2 \sum_{i_{J}^\sigma} d_{iJ}^N \left({\bf 1}_{[i:m_{ij}=-1]} q_{jJ}^\sigma - {\bf 1}_{[i:m_{ij}=+1]} (1-q_{jJ}^\sigma)\right).
\end{eqnarray}
We solve for $q_{jJ}^\sigma$ by setting the derivative to zero. We can divide by $\delta_j$ since all the example weights are positive and therefore $\delta_j\ne0$.
\begin{eqnarray}
q_{jJ}^\sigma = \frac{ \sum_{i:i_{J}^\sigma \wedge m_{ij}=+1}d_{iJ}^N}{\sum_{i:i_{J}^\sigma \wedge m_{ij}=-1} d_{iJ}^N +  \sum_{i:i_{J}^\sigma \wedge m_{ij}=+1} d_{iJ}^N}= \frac{ \sum_{i_{J}^\sigma \wedge i_j^+}d_{iJ}^N}{\sum_{i_{J}^\sigma} d_{iJ}^N}.
\end{eqnarray}
\end{proof}
%This result can be extended beyond the binary setting for sigma to any partitioning of the examples. 
Theorem \ref{thm:approxsol} has a very natural interpretation. The minimizer $q_{jJ}^\sigma$ can be seen as the weighted probability of weak hypothesis $j$ producing a positive margin and weak hypothesis $J$ producing a margin $\sigma$ (either positive or negative.)
\subsection{Implementing the approximation as an algorithm}
{\bf Initialization:} 
The recursive form of equation \ref{eqn:sumprodapprox} requires us to define a setting for $\W{\sigma}{0}{N}$. Let $\W{\sigma}{0}{N}=|i_1^\sigma |$ be the count of examples with a $\sigma$ margin with the first weak hypothesis.%, and let $\W{+\sigma}{j0}{N}=\W{\sigma}{0}{N}$. 
This is equivalent to setting the initial weight of each example to one, which gives all the examples equal weight before being classified by the first weak hypothesis.\\% Similarly, AdaBoost initializes the weights of example to the uniform distribution, which with un-normalized weights would be equivalent to setting all the weights to one.
{\bf Weight updates:} 
Theorem \ref{thm:approxsol} shows that calculating the error minimizer requires keeping sums of weights which involve two weak hypotheses $j$ and $J$. Similarly to our approximation of the sums of weights  $\W{\sigma}{J}{N+1}$, we need to approximate $q_{jJ}^\sigma$ as examples are presented the the online algorithm. Applying the same approximation to estimate $q_{jJ}^\sigma$ yields the a similar optimization problem, however the approximation error minimizers for this problem involves three margins. We avoid calculating this new minimizer, and instead use the same correction we used for  $\W{\sigma}{J}{N+1}$ (see Algorithm \ref{alg:ocb}.)\\
{\bf Running time:} 
Retraining AdaBoost for each new example would require $O(N^2J)$ operations, as the classifier needs to be fully trained for each example. By using our approximation we can train the classifier in $O(NJ^2)$, where the processing of each example takes $O(J^2)$. A tradeoff between accuracy and speed can be established by only computing the last $K$ terms of the product, and assuming that the others are equal to one. This speedup results in running time complexity  $O(NJK)$, where $K$ will be defined as the order of the algorithm. Algorithm \ref{alg:ocb} shows the Online Coordinate Boosting algorithm with order $K$.
\begin{algorithm}[tb]
   \caption{K-order Online Coordinate Boosting}
   \label{alg:ocb}
\begin{algorithmic}
   \STATE {\bfseries Input:} Example classifications $M\in\{-1,1\}^{N\times J}$ where $m_{ij} = y_i h_j(x_i)$
   \STATE	\hspace{0.39in} Order paramenter $K$
   \STATE	\hspace{0.39in} Smoothing parameter $\epsilon$
   \STATE Option 1: Initialize $\alpha_j=0,\Delta\alpha_j=0$ where $j=0,..,J$.
   \STATE \hspace{1in} $W_{jk}^+=\epsilon, W_{jk}^-=\epsilon$ where $j,k=0,..,J$
   \STATE Option 2: Initialize using AdaBoost on a small set.
%   \STATE If $J=1$ use update equation \ref{eqn:init_weights}
   \FOR{$i=1$ {\bfseries to} $N$}
   	\STATE $d=1$
	\FOR{$j=1$ {\bfseries to} $J$}
		\STATE $j_0 = \max(0,j-K)$
		\STATE $\pi_j^+ = \prod_{k=j_0}^{j-1} \left( \frac{W_{jk}^{+}}{W_{jj}^+} e^{-\Delta\alpha_k} + (1- \frac{W_{jk}^{+}}{W_{jj}^+}) e^{\Delta\alpha_k}\right)$
		\STATE $\pi_j^- =  \prod_{k=j_0}^{j-1} \left(  \frac{W_{jk}^{-}}{W_{jj}^-} e^{-\Delta\alpha_k} + (1- \frac{W_{jk}^{-}}{W_{jj}^-}) e^{\Delta\alpha_k}\right)$
		\FOR{$k=1$ {\bfseries to} $j$}
			\STATE $W_{jk}^{+} \leftarrow  W_{jk}^{+} \pi_j^+ + d\1_{[m_{ik}=+1]} \cdot \1_{[m_{ij}=+1]}$
			\STATE $W_{jk}^{-} \leftarrow  W_{jk}^{-} \pi_j^- + d\1_{[m_{ik}=-1]} \cdot \1_{[m_{ij}=-1]}$
		\ENDFOR
		\STATE $\alpha_j^{i} = \frac{1}{2}\log\frac{W_{jj}^+}{W_{jj}^-}$
		\STATE $\Delta\alpha_j = \alpha_j^{i}-\alpha_j^{i-1}$
		\STATE $d \leftarrow d e^{-\alpha_j^{i} m_{ij}}$
	\ENDFOR
   \ENDFOR
   \STATE {\bfseries Output: $\alpha_1^N,..,\alpha_J^N$}
\end{algorithmic}
\end{algorithm}
\subsection{Similarity to Oza and Russell's Online algorithm}
Let us compare Oza and Russell's algorithm \cite{oza01online} to our algorithm. Excluding feature selection, there are two steps in their algorithm. The first adds the example weight to the appropriate cumulative sum, and the second reweights the example. Step one is identical to the addition that our algorithm performs if we assume that all the terms in the product in equation \ref{eqn:sumprodapprox} are equal to one, or equivalently that $\Delta\alpha_j=0$. At step two, reweighting the example,
%Oza and Russell's incremental algorithm  update rule bears striking similarity to our algorithm. However, Oza and Russell use batch AdaBoost's old update rule \cite{freund97decision}. 
%The assumption for the update is that after the update $W_J^+=W_J^-$. 
Oza and Russell break the update rule to two cases, one for each type of margin:
\begin{eqnarray}
m_{ij}=+1&:& \qquad
d\leftarrow d\frac{W_j^+ + W_j^-}{2W_j^+} =\frac{d+d\left(\frac{W_j^-}{W_j^+}\right)}{2} \\
m_{ij}=-1&:& \qquad
d\leftarrow d{\frac{W_j^+ + W_j^-}{2W_j^-}} = \frac{d+d\left(\frac{W_j^+}{W_j^-}\right)}{2}.
\end{eqnarray}
The two cases can be consolidated to one case when we introduce the margin into the equations.
Interestingly, this update rule smooths the examples weights by taking the average
between the old weight and the new updated weight that we would get by AdaBoost's exponential reweighting \cite{schapire99improved}:
\begin{eqnarray}
\label{eqn:ozasimilarity_start}
d \leftarrow \frac{d+d\left(\frac{W_j^+}{W_j^-}\right)^{-m_{ij}}}{2}= \frac{d + d e^{-2\alpha_j m_{ij}}}{2}.
\label{eqn:ozasimilarity_end}
\end{eqnarray}
If we do not perform corrections to the W's, and only add the weight of the last example to them, we reduce our algorithm to a form similar to Oza and Russell's algorithm. Since Oza and Russell use an older AdaBoost update rule, when put in an online framework, the weights in their algorithm are squared and averaged compared to our weights.
\section{Experiments and Discussion}
We tested our algorithm against modified versions of Oza and Russell's online algorithm.% and FilterBoost. 
The only modification was the removal of the weak hypothesis selection process. Instead we fixed a predefined set of ordered weak hypotheses. Three experiments were conducted, the first with random data, the second with the MNIST dataset, and the third with a face dataset. Throughout all our experiments we initialized our algorithm with the cumulative weights that were produced by running AdaBoost on a small part of the training set.
We needed to initialize our algorithm to avoid divide-by-zero errors when only margins of one type have been seen for small numbers of training examples. We similarly initialized Oza and Russell's algorithm, however, since our training sets are large, it had little influence on their algorithm's performance compared to a non-initialized run.
 \begin{figure}[th]
  \begin{center}
    \subfigure[Synthetic: Average approximation error as the number of training examples is increased. Concept drift every $10K$ examples. Averaged over 5 runs. Accuracy improves with higher order.]{\label{fig:synthetic}\includegraphics[width=2.5in,height=1.51in]{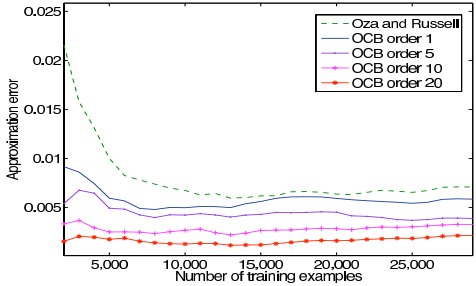}} \qquad
    \subfigure[MNIST: Combined classifier test error as the number of training examples is increased. OCB and AdaBoost achieve lower test error rates than Oza and Russell's algorithm.]{\label{fig:mnist_comb}\includegraphics[width=2.5in,height=1.51in]{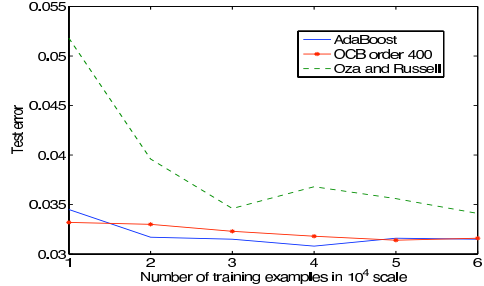}} \\
    \subfigure[Face data: Average normalized approximation error as the number of training examples is increased. Averaged over 10 permutations of the training set. OCB best approximates AdaBoost.]{\label{fig:face_approx_error}\includegraphics[width=2.5in,height=1.51in]{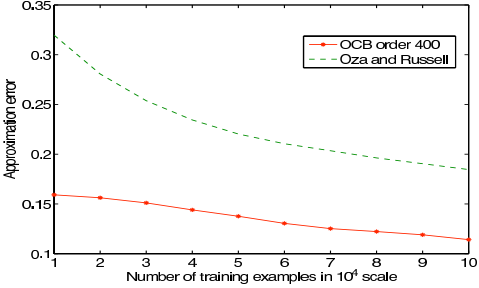}} \qquad
    \subfigure[Face data: Average 1-AUC as the number of training examples is increased. OCB and AdaBoost have almost identical performance on 100K test set.]{\label{fig:face_test_error}
\includegraphics[width=2.5in,height=1.51in]{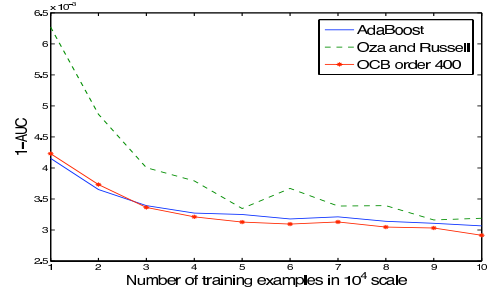}}
  \end{center}
  \caption{Approximation and Test error experiments}
  \label{fig:experiments}
\end{figure}
\\
\textbf{Synthetic data:} The synthetic experiment was set up to test the adaptation of our algorithm to concept change, and the effects of the algorithm's order on its approximation error. We created synthetic data by randomly generating multiple margin matrices $M_t$ which contain margins $m_{ij}$. Each matrix was created one column at a time where we draw a random number between zero and one for each column. The random number gives us the probability of the weak hypothesis classifying an example correctly. To simulate concept drift, each matrix $M_t$ was generated by perturbing the probabilities of the previous matrix by a small amount and sampling new margins accordingly.
%Then we populate each column randomly with positive and negative margins according the weak hypothesis' probability.
We consider the normalized approximation error of the classifier learned by the online algorithms and the equivalent boosted classifier. Let the normalized approximation error between AdaBoosts's weight vector and another weight vector be defined by $err(\alpha_{ada},\alpha)=0.5\| \frac{\alpha_{ada}}{\|\alpha_{ada}\|_1} - \frac{\alpha}{\|\alpha\|_1} \|_1$.
We compared the approximation error for each example that was presented to the online algorithms with the equivalently trained batch classifier. 
The experiment was repeated $5$ times with different margin generation probabilities. Each experiment comprised of three $M_t$ matrices of size $10,000\times 20$, thereby simulating concept drift every $10,000$ examples.
Figure \ref{fig:synthetic} shows the average approximation error as the number of training examples is increased. Increasing our algorithm's order shows improvement in performance. However, we have witnessed that a tradeoff exists when training large classifiers, where the approximation deteriorates as the order is increased too much. The tradeoff exists since $q_{jJ}^\sigma$ is a greedy error minimizer, and might not optimally minimize the total approximation error.
%We ran this comparison for $10$ same sized random $M$ matrices, and averaged the approximation error.  We set the number of weak hypotheses to $10$ and then total number of examples to $100,000$.\\
%
%Make point that when J is large, order becomes significant.\\
\\ \textbf{Face data:}
We conducted a frontal face classification experiment using the features from an existing face detector. These weak hypotheses are thresholded box filter decision stumps. The trained face detector contains $1520$ weak hypotheses, which were learned using batch AdaBoost with resampling \cite{jones03face,viola01rapid}. Using the existing set of weak hypotheses, we compared the different online algorithms for approximation and generalization 
error on new training and test sets. Both our training and test sets consist of $93,000$ non-face images collected from the web, and $7,000$ hand labeled frontal faces all of size $24\times 24$.% The negative image patches were collected from random images found on the web, and the frontal face patches were labeled manually. 
We created $10$ permuted training sets by reordering the examples in the original training set $10$ times. The experimental results were averaged over the $10$ sets. This was done to verify that our algorithm is robust to any ordering. Our algorithm was initialized with the cumulative sums of weights obtained by training AdaBoost with the first $5000$ examples in each training set. Initializing Oza's algorithm did not improve its performance. We compared the online algorithms to AdaBoost's while training for every $10,000$ examples. The training results in figure \ref{fig:face_test_error} show that our online algorithm with order $400$ achieves better average AUC rates than Oza and Russell's algorithm. We compare average AUC since there are far less positives in the test set.
Figure \ref{fig:face_approx_error} shows that our average approximation of AdaBoost's weak hypotheses weights is also better. We found that setting an order of $400$ with frontal face classifiers of size $1520$ works well.\\
\textbf{MNIST data:} The MNIST dataset consists of $28\times 28$ images of the digits $[0,9]$. The dataset is split into a training set which includes $60000$ images, and a test set which includes $10,000$ images. All the digits are represented approximately in equal amount in each set. Similarly to the face detector, we trained a classifier in an offline manner with sampling to find a set of weak hypotheses. When training we normalized the images to have zero mean and unit variance. We used $h_j(x) = sign(\| x_j - x \|_2  - \theta)$ as our weak hypothesis. The weak learner found for every boosting round the vector $x_j$ and threshold $\theta$ that create a weak hypothesis which minimizes the training error. As candidates for $x_j$ we used all the examples that were sampled from the training set at that boosting round.
We partitioned the multi-class problem into $10$ one-versus-all problems, and defined a meta-rule for deciding the digit number as the index of the classifier that produced the highest vote.
%We compared the online algorithms to AdaBoost for each of the classes. 
The generalization and approximation error rates for each classifier can be seen in tables \ref{tbl:mnist_err} and \ref{tbl:mnist_approx}. The performance of the combination rule using each of the methods can be seen in figure \ref{fig:mnist_comb}. Again, we found that order $400$ performs well.\\
\textbf{Concluding remarks:} We showed that by deriving an online approximation to AdaBoost we were able to create a more accurate online algorithm. Nevertheless, the relationship between proximity of weak hypothesis weights and generalization needs to be further studied. One of the drawbacks of the algorithm is that it usually needs to be initialized with AdaBoost on a small training set. We are investigating adaptive weight normalization, which may allow for a better initialization scheme. We are also trying to connect FilterBoost's filtering framework and feature selection with OCB to improve performance and speed. 
 
\begin{table}[t]
\begin{tabular*}{0.95\textwidth}{@{\extracolsep{\fill}}|c|c|c|c|c|c|c|c|c|c|c|}
\hline
&0&1&2&3&4&5&6&7&8&9\\
\hline
AdaBoost & 0.31&0.19&0.8&0.89&0.9&1.0&0.47&0.79&1.62&1.29 \\
\hline\hline
%Oza 5K & 0.34&0.25&0.82&1.06&0.93&0.97&0.53&0.97&1.86&1.3 \\
%\hline
%OCB 5K & 0.33&0.19&0.77&0.9&0.89&0.98&0.48&0.81&1.65&1.31 \\
%\hline
OCB &\bf 0.33&\bf 0.18&\bf 0.78&\bf 0.93&0.87&\bf 0.98&\bf 0.49&\bf 0.82&\bf 1.61&\bf 1.3 \\ %2K init
\hline
%Oza 2K & 0.34&0.28&0.81&1.04&0.88&0.98&0.5&0.95&1.85&1.31\\ %2K init
%\hline
Oza &0.35&0.27&0.79&1.05&\bf 0.85&1.02&0.55&0.97&1.82&1.36 \\%0-init
\hline
\end{tabular*}
\caption{MNIST test error in \% for each classifier one-vs-all}
\label{tbl:mnist_err}
\end{table}

\begin{table}[t]
\begin{tabular*}{0.95\textwidth}{@{\extracolsep{\fill}}|c|c|c|c|c|c|c|c|c|c|c|}
\hline
&0&1&2&3&4&5&6&7&8&9\\
\hline\hline
%Oza &0.1&0.1&0.1&0.1&0.1&0.1&0.1&0.1&0.1&0.09 \\ %2K
%\hline
OCB &\bf 0.07&\bf 0.1&\bf 0.04&\bf 0.04&\bf 0.04&\bf 0.04&\bf 0.06&\bf 0.05&\bf 0.04&\bf 0.03\\ %2K
\hline
Oza &0.1&0.1&0.09&0.09&0.1&0.09&0.1&0.1&0.1&0.09\\ %0-init
\hline
\end{tabular*}
\caption{MNIST approximation error for each classifier one-vs-all}
\label{tbl:mnist_approx}
\end{table}

\small
\bibliography{online-review}

\begin{thebibliography}{10}

\bibitem{bradley07filterboost}
Joseph~K Bradley and Robert~E. Schapire.
\newblock Filterboost: Regression and classification on large datasets.
\newblock In {\em Neural Information Processing Systems}, pages 185--192. MIT
  Press, 2008.

\bibitem{freund97decision}
Yoav Freund and Robert~E. Schapire.
\newblock A decision-theoretic generalization of on-line learning and an
  application to boosting.
\newblock {\em Journal of Computer and System Sciences}, 55(1):119--139, 1997.

\bibitem{huang07incremental}
C.~Huang, H.~Ai, T.~Yamashita, S.~Lao, and M.~Kawade.
\newblock Incremental learning of boosted face detector.
\newblock In {\em Interantional Conference on Computer Vision}, pages 1--8,
  2007.

\bibitem{javed05online}
Omar Javed, Saad Ali, and Mubarak Shah.
\newblock Online detection and classification of moving objects using
  progressively improving detectors.
\newblock In {\em Computer Vision and Pattern Recognition}, pages I: 696--701,
  2005.

\bibitem{jones03face}
Michael Jones and Paul Viola.
\newblock Face recognition using boosted local features.
\newblock In {\em MERL Technical Report TR2003-25}, 2003.

\bibitem{kivinen97exponentiated}
Jyrki Kivinen and Manfred~K. Warmuth.
\newblock Exponentiated gradient versus gradient descent for linear predictors.
\newblock {\em Information and Computation}, 132(1):1--63, 1997.

\bibitem{littlestone88learning}
Nick Littlestone.
\newblock Learning quickly when irrelevant attributes abound: A new
  linear-threshold algorithm.
\newblock {\em Machine Learning}, 2(4):285--318, 1988.

\bibitem{oza01online}
N.~Oza and S.~Russell.
\newblock Online bagging and boosting.
\newblock In {\em Artificial Intelligence and Statistics}, pages 105--112.
  Morgan Kaufmann, 2001.

\bibitem{pham07online}
Minh-Tri Pham and Tat-Jen Cham.
\newblock Online learning asymmetric boosted classifiers for object detection.
\newblock In {\em Computer Vision and Pattern Recognition}. IEEE Computer
  Society, 2007.

\bibitem{schapire98boosting}
Robert~E. Schapire, Yoav Freund, Peter Bartlett, and Wee~S. Lee.
\newblock Boosting the margin: a new explanation for the effectiveness of
  voting methods.
\newblock {\em Annals of Statistics}, 26(5):1651--1686, 1998.

\bibitem{schapire99improved}
Robert~E. Schapire and Yoram Singer.
\newblock Improved boosting algorithms using confidence-rated predictions.
\newblock {\em Machine Learning}, 37(3):297--336, 1999.

\bibitem{viola01rapid}
Paul Viola and Michael Jones.
\newblock Rapid object detection using a boosted cascade of simple features.
\newblock In {\em Computer Vision and Pattern Recognition}, 2001.

\bibitem{wu07improving}
B.~Wu and R.~Nevatia.
\newblock Improving part based object detection by unsupervised, online
  boosting.
\newblock In {\em Computer Vision and Pattern Recognition}, pages 1--8, 2007.

\end{thebibliography}
\bibliographystyle{plain}

\end{document}